\newtheorem{theorem}{Theorem} 
\newtheorem{corollary}{Corollary}[theorem]
\title{\LARGE \bf A Non-iterative Spatio-temporal Multi-task Assignments based Collision-free Trajectories for  Music Playing Robots}
\author{Shridhar Velhal$^{1}$, Krishna Kishore VS$^{2}$ and Suresh Sundaram$^{3}$ 
\thanks{$^{1}$Shridhar Velhal is a Ph.D. student in Department of Aerospace Engineering, Indian institute of Science, Bangalore, India. 
        {\tt\small velhalb@iisc.ac.in}}%
\thanks{$^{2}$Krishna Kishore VS is a Third year B.Tech student  in Department of Electronics and Communication Engineering, National Institute of Technology, Tiruchirappalli, India. 
        {\tt\small  krishnakishorevs@gmail.com}}%
\thanks{$^{3}$ Suresh Sundaram is an Associate Professor in Department of Aerospace Engineering, Indian institute of Science, Bangalore, India.         {\tt\small vssuresh@iisc.ac.in}}%
}
\begin{document}
\maketitle
\thispagestyle{empty}
\pagestyle{empty}

\begin{abstract}
In this paper, a non-iterative spatio-temporal multi-task assignment approach is used for playing piano music by a team of robots. This paper considers the piano playing problem, in which an algorithm needs to compute the trajectories for a dynamically sized team of robots who will play the musical notes by traveling through the specific locations associated with musical notes at their respective specific times. A two-step dynamic resource allocation based on a spatio-temporal multi-task assignment problem  (DREAM), has been implemented to assign robots for playing the musical tune. The algorithm computes the required number of robots to play the music in the first step. In the second step, optimal assignments are computed for the updated team of robots, which minimizes the total distance traveled by the team. 
Even for the individual feasible trajectories, the multi-robot execution may fail if robots encounter a collision. As some time will be utilized for this conflict resolution, robots may not be able to reach the desired location on time. This paper analyses and proves that, if robots are operating in a convex region, the solution of the DREAM approach provides collision-free trajectories.
The working of the DREAM approach has been illustrated with the help of the high fidelity simulations in Gazebo operated using ROS2. The result clearly shows that the DREAM approach computes the required number of robots and assigns multiple tasks to robots in at most two steps. The simulation of the robots playing music, using computed assignments, is demonstrated in the attached video.
video link: \url{https://youtu.be/XToicNm-CO8}
\end{abstract}
 
\section{Introduction}
Multi-robot systems have been used to execute tasks where a team of robots needs to visit a set of spatially distributed locations. The robots have been used for different purposes such as logistics \cite{chen2021integrated}, surveillance  \cite{notomista2019optimal,harikumar2019mission,venugopalan2015multi} and various other applications \cite{khamis2015multi,korsah2013comprehensive,konstantakopoulos2022vehicle}. Robots are assigned to tasks such that they minimize some specific criteria (e.g., minimization of total distance traveled \cite{cheikhrouhou2021comprehensive}, completion time \cite{lee2018resource,xu2021completion}, 
energy consumption \cite{luan2020energy}). With recent developments in IoT and communication infrastructure, customers demand spatial visits with time constraints. In logistics operations, the delivery person not only needs to visit the locations but needs to visit those locations with some time constraints. The multi-task assignment problems with different temporal constraints are reviewed in \cite{nunes2017taxonomy}.

\begin{figure}[t!]
    \centering
    \includegraphics[width=0.90\linewidth]{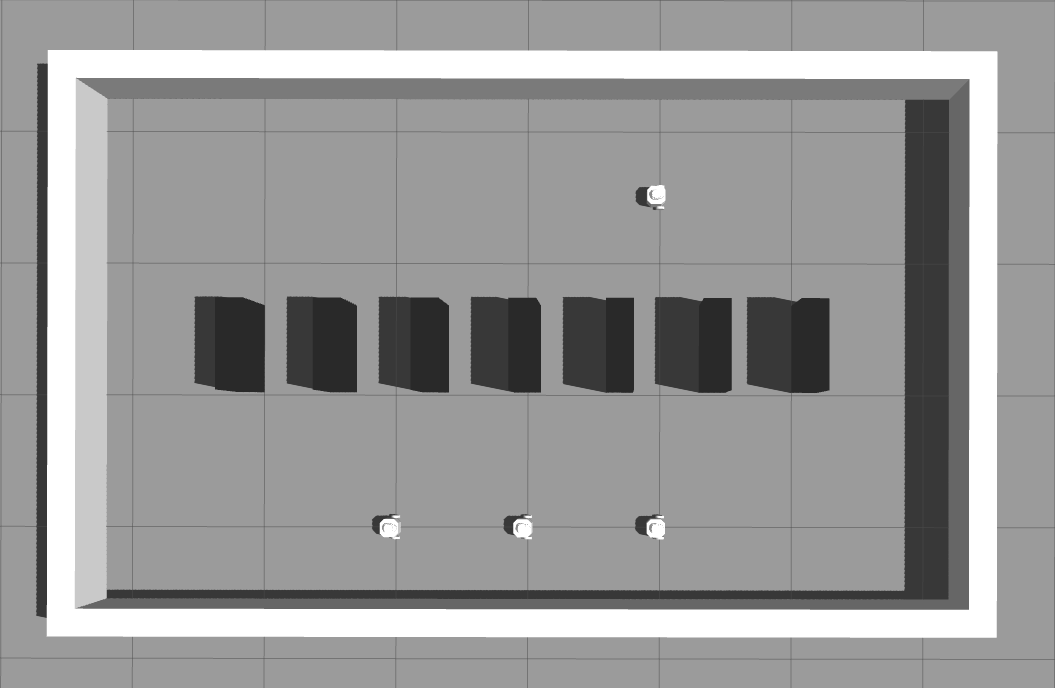}
    \caption{Piano Setup in Gazebo for playing music using turtlebots }
    \label{fig:set-up}
\end{figure}
 
The problem of music-playing robots is formulated by Smriti Chopra and Magnus Egerstedt in  \cite{chopra2012multi,chopra2014heterogeneous,chopra2015spatio}.
For playing a musical note, a robot must visit the specific location which plays that sound note. Each musical note is connected to a different spatial location. Robots must visit the prescribed note locations at specific times to play musical tunes. In this way, the problem of music-playing robots is converted to multiple spatio-temporal tasks for the team of robots. A team of robots needs to compute the assignments of robots to multiple spatio-temporal tasks, following which they can play the music.

The task allocation and scheduling schemes \cite{zhang2013multi,bischoff2020multi} have been used to schedule the tasks for multiple robots. But in these approaches tasks are handled with soft temporal constraints and all tasks will not be executed at the desired time. Hence can not be applied for the exact time constraints. Also due to the exact-time constraints, time-windowed task assignment approaches can not be used.  The just-in-time tasks assignment \cite{monden2011toyota,emde2012optimally,nishida2019just,nishida2022dynamic} approaches have been employed in automobile and automation industries, but it assumes tasks are predefined and the solution requires the off-line computations. The spatio-temporal multi-task assignment (STMTA) problem has been studied in \cite{chopra2012multi,chopra2014heterogeneous,chopra2015spatio , velhal2021decentralized,velhal2022dynamic}.  
 
For the given spatio-temporal tasks, depending on the tasks and velocity of robots, some minimum number of robots are required to execute them. If the formulated STMTA is ill-posed, then the obtained solution is not valid and demands unrealistic velocities. The velocity-constrained STMTA problem is solved in \cite{chopra2014heterogeneous}. A Hungarian problem \cite{kuhn2005hungarian} is solved multiple times, by increasing the robots one by one until a feasible solution is achieved. It is an iterative method and can not be used for online computations.

Recently, a non-iterative solution has been proposed using Dynamic resource allocation with a decentralized multi-task assignment (DREAM) approach \cite{velhal2022dynamic}. DREAM solves the STMTA (Hungarian) problem, once to compute the required minimum number of robots. Then, with an updated team of robots, STMTA is solved to get the optimal solution. So with only two steps, DREAM computes the required number of robots and their optimal assignments. 

In this paper, the music-playing problem is solved and the execution guarantee is proved.  Once the spatio-temporal tasks are generated for a given music, the current team of robots uses the DREAM algorithm. It computes the minimum number of robots required to play musical notes. Required robots are added to the team, and the updated STMTA problem is solved to get optimal assignments. Once the assignments are computed, the trajectory generation algorithm is used to compute the feasible trajectories for each robot. For multi-robot systems, even if all individual trajectories are feasible, one can not guarantee execution due to the possibility of collisions among robots. Also note that reactive collision avoidance methods require some extra (and uncertain) time to resolve the conflicts. As some time will be utilized for conflict resolution, the temporal feasibility of tasks may become invalid. Robots may not be able to reach the desired location at the desired time. For the playing music, the untimely play of musical notes leads to dissonance. Hence, there is a need to guarantee that the solution is feasible and has collision-free trajectories.

The main contribution of this paper is the theoretical analysis of the collision-free nature of obtained trajectories. In this paper, we prove that, for homogeneous robots operating in convex space, the obtained trajectories are collision-free. By following the computed trajectories, robots will not collide with other teammates and unassigned robots (which were removed from the team). The same has been proved for robots operating in the arena (simulated environment) used in the paper. 
The working of the music-playing robots is demonstrated in Gazebo simulations, with turtlebots operated using ROS2 (galactic). The high-fidelity simulation demonstrates the online computability of the non-iterative DREAM algorithm in practical use.

The rest of the paper is organized as follows: Section \ref{sec:problem_formulation} provides the mathematical problem formulation for piano-playing robots. Section \ref{sec:STMTA} explains the DREAM approach to solve the spatio-temporal tasks and proves the collision-free nature of obtained trajectories. Section \ref{sec:Simulations} explain the simulation architecture used in ROS2-Gazebo to operate robots to play music and presents the performance of robots for playing music. Finally, Section \ref{sec:Conclusion} concludes the paper.

\section{Problem Formulation} \label{sec:problem_formulation}
Consider a two-dimensional arena with walls and musical strings as shown in fig \ref{fig:set-up}. Consider a string placed at the center of lanes between the walls, which will generate sounds with a specific frequency whenever the robots cross that midsection of the lane. 
Robots will travel through the lanes and plunk the strings to play the musical notes. With this piano set-up, a musical tune is interpreted as the sequence of the musical notes to be played at specific time instants. Robots need to reach the respective note locations at specific times. This defines the spatio-temporal tasks for robots. Robots can play the music by routing through such timed positions. Next, tasks and operations for piano music-playing robots are mathematically formulated. Before that, we list the notations used in the paper.

 The following symbols are used in paper,\\
$R$ :  robot  \\
$T$ : task  \\
$t $ : time of   task  \\
${\bm{p}}  $ : position   \\
$\mu$ : sequence of tasks assigned to robot  \\
$(\cdot)^R_i $: for ith robot \\
$(\cdot)^T_j $ : for jth task \\
$t_j$ : desired execution time of task $T_j$ \\
$T_j({\bm{p}}_j^T,t_j  ) =  T_j :  $ $j^{th}$ spatio-temporal task \\
$C_{ij}^f$ = cost of $R_i$ executing the  $T_j$ as a first task \\
$C_{kj}^s$ = cost of robot will execute the  task $T_j$ just after the task $T_k$ (subsequent task).\\
$\delta_{ij}^f$ : decision variable whether robot $R_i$ execute the task $T_j$ as first task or not.\\
$\delta_{kj}^s$ : decision variable whether robot  execute the task $T_j$ just after task $T_k$ or not.\\

Fig \ref{fig:set-up} shows the piano arena created in the Gazebo 11, and Turtlebot3 are used as robots.  The arena consists of 7 equally separated walls. The space between the separated walls is the piano tiles or keys. Each tile is associated with a predefined specific music tune. A robot must cross the lane's midsection to play that specific note. A robot needs to wait at the entry point till the desired time; it must cross the lane at the desired time. The musical note will be played once the robot passes through that lane's midsection.

\begin{figure}[b!]
    \centering
    \includegraphics[width=0.95\linewidth]{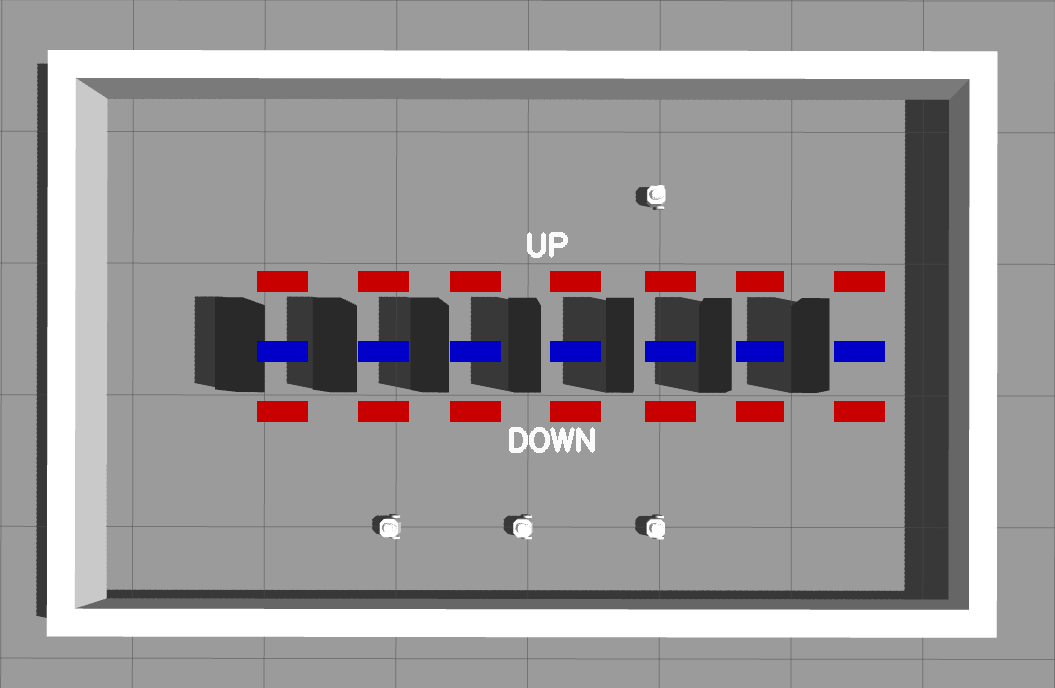}
    \caption{Illustration of notes of piano and playing of notes}
    \label{fig:lanes}
\end{figure}

Fig \ref{fig:lanes} shows the piano setup highlighting the lanes (traveling through which musical note is played) and waiting area. 
The musical notes are played whenever the robot crosses the middle patches in the lane. A robot can cross the patch either by traveling top-to-bottom or bottom-to-top.
Now robot needs to play the desired musical notes at specific times. Hence, they will wait until the desired time in a waiting area at the entry of lanes (from top or bottom). This waiting area is marked by red colored rectangles on both lane entries.

If the robot is coming from the top, it waits at the top waiting area (if it comes from below, it waits at the bottom). The robot will wait till $t_\star - \tau$, where $t_\star$ is the desired time at which the task is to be executed. $\tau$ is the buffer time for the robot to travel from the waiting area to the mid-point of the lane. The robot will not wait at mid-point; instead, it crosses the midpoint and reaches another end of the lane. This way, the musical note is played only once. Also, a buffer time of 0.05 sec is added to the musical note to avoid playing the note multiple times when robots travel through the lane.

\subsection{Mathematical Problem Formulation}
Consider  a  set of $N$ robots denoted as  $\mathcal{R}$,  $ \mathcal{R} =  \{R_1,R_2,\cdots,R_N\}$. The positions of robots $R_i$ is denoted as ${\bm{p}}_i^R = (x_i^R , y_i^R)$. 
Robots needs  to play the bunch of piano notes in properly timed sequences.

\subsubsection{Spatio-temporal tasks }
Consider a musical note $j$  with the position   ${\bm{p}}_j^T = (x_j^T, y_j^T)$ and desired time $t_j$.
Consider a musical note $j$  with the position   ${\bm{p}}_j^T = (x_j^T, y_j^T)$ and desired time $t_j$. To play this note, the robot has to visit a waiting area near the desired lane, wait until the desired execution time, and then pass through the desired location ${\bm{p}}_j^T$  at desired time $t_j$ to play that musical note.  The steps in playing notes can be listed as
\begin{enumerate}[(a)]
    \item reach to the waiting  location near  ${\bm{p}}_j^T$
    \item wait till desired time  $t_j$
    \item cross the lane
\end{enumerate}

In rest of the paper, the musical note playing is defined as a spatio-temporal task $T_j({\bm{p}}_j^T,t_j)$ is referred as $T_j$.  
This paper assumes that all robots are homogeneous. Each robot has the same maximum velocity ($v^{max}$), and can play all musical notes. The robots can communicate and share their pose with the team. The assignments are computed in a centralized way, and each robot will be communicated to execute its respective tasks.

To play the musical tune, a team of robots needs to play a sequence of musical notes. Let us consider that a musical tune has $M$ notes (in general $M>N$), and a team of robots needs to execute the given $M$ spatio-temporal task. This defines the spatio-temporal multi-task assignment problem.
The objective of the problem is to compute the assignments for the team of robots such that they will play the musical notes at respective times. One should note that for playing the musical notes, i.e., for executing the spatio-temporal tasks, some minimum number of robots is required. If the given number of robots is less than this required number, then the problem is ill-posed, and all tasks can not be executed. Hence the objective of the problem is twofold; the first is to compute the minimum number of robots required to play the given musical notes, and the second is to compute the assignments for an updated team of robots to play the given musical notes.

\section{Spatio-Temporal Multi-Task Assignments  }\label{sec:STMTA}
The team of robots needs to play the musical notes by traveling through the lane of respective notes at desired times. These task demands the robots to reach specific spatial locations at specific times; hence tasks are termed spatio-temporal tasks. For executing the given spatio-temporal tasks, some minimum number of robots is required. One needs to compute this minimum number of robots and use at least those many robots to execute the tasks. Otherwise, the problem is ill-posed, and the given team of robots will not be able to execute all the spatio-temporal tasks with any assignments. This minimum number of robots required depends on the maximum velocity of robots and the given spatio-temporal tasks.

For the computation of feasible assignments, we use the DREAM approach proposed in \cite{velhal2022dynamic} to assign the spatio-temporal tasks.
The DREAM approach uses a two-step method in which, at first, the infeasible tasks are assigned a cost equal to a large value, and the task assignment problem is solved with a given number of robots. Now from the solution, the number of infeasible assignments is identified, and those many robots are added to the team of robots. The updated team size is the minimum number of robots, and the proof of this optimality is provided in \cite{velhal2022dynamic}.
At first,  the cost function is defined for the tasks. Next, the spatio-temporal multi-task assignment problem is defined, and finally, the DREAM algorithm is provided to compute the minimum number of robots required and optimal assignments.

\subsection{Cost Function}
The cost of a spatio-temporal task is the distance that needs to be traveled by a robot to reach the task location on or before the time of the task from its previous location. For a robot executing its first task from its initial position, the cost of the first task ($C^f$) is the distance traveled by the robot from its current position to the task position on or before the desired task time.

First we denote $d({\bm{p}}_i^R,{\bm{p}}_j^T)$  as the distance computed along the feasible path from point ${\bm{p}}_i^R $  to  ${\bm{p}}_j^T $. This distance is computed using the $A^*$ motion planning algorithm on the map of the arena. Now the cost for first  task is defined as
 \begin{align}
C^f_{ij} &= \begin{cases}      d({\bm{p}}_i^R,{\bm{p}}_j^T)      & \text{if } \dfrac{ d({\bm{p}}_i^R,{\bm{p}}_j^T ) }{V_{max}^R}  \le  t_j   \\  
\kappa &  \text{ otherwise}
\end{cases} \label{eq:cost_DT}    \\  
   &\text{for } i \in \mathcal{I} = \{1,2,\cdots,N\} ,\quad   j \in \mathcal{J} = \{1,2,\cdots,M \} \qquad   \nonumber 
\end{align}
where, $\kappa$ is a large value.

The cost for executing the subsequent tasks ($C^s$)  by the robot is the distance traveled by the robot from its previous task location to reach the current task location on or before the desired execution time of that subsequent task.
\begin{align}
  C^s_{kj} &= \begin{cases}  d({\bm{p}}_k^T,{\bm{p}}_j^T)  & \text{if } t_{k,j}^{min} \le ( t_j -t_k   )   \\
\kappa  & \text{if }  t_{  k,j}^{min} >    ( t_j -t_k ) >0 \\
\infty & \text{if }    t_j - t_k    \le 0
\end{cases}  \label{eq:cost_TT1}      \\ 
 &\text{for } k \in \mathcal{K} = \{ 1,2,\cdots,M-1 \} \quad   j \in \mathcal{J}   \nonumber    
\end{align}

where $ t_{  k,j}^{min} $ is the minimum time required by robot to travel from the location of task $T_k$ to task $T_j$ and it is computed as
\begin{align}
t_{  k,j}^{min}  = \dfrac{  d({\bm{p}}_k^T,{\bm{p}}_j^T)   }{V_{max}^R}  
\end{align}

One should note that the distances computed here are with the central point in the lane. In execution, robots will wait at the waiting location near that lane and travel that lane and go to the next waiting location. To execute the same note, a robot needs to travel the lane once; hence the distance is equal to the length of the lane. For moving to another lane, the actual distance computed on the map for center points of lanes is the same as the distance from one waiting point to another waiting point.

\subsection{Optimization Problem} 
A task allocation algorithm assigns robots to multiple spatio-temporal tasks. A robot will execute the tasks in a sequence, and we denote the sequence assigned to robot $R_i$ by $\mu_i$. Here, the problem of computing sequence $\mu_i$ has been converted to compute each move of one robot from one location to another; combining all moves, one can get the sequence of tasks. Each robot computes its sequence to execute all spatio-temporal tasks while minimizing the distance traveled. The first decision variable $\delta^f_{ij}$ is used to denote that either a robot moves from position $p^R_i$ to execute the task $T_j$ by reaching  ${\bm{p}}^T_j$ on or before time $t_j$ . The subsequent decision variable   $\delta^s_{kj}$ is used to denote that either a robot moves from its previous task's position $p^T_k$ (at time  $t_k$)   to execute the next task $T_j$ at location ${\bm{p}}^T_j$  on or before the desired time  $t_j$. 
The integer programming problem is defined as,
\begin{subequations} 
  \addtocounter{equation}{-1}
\begin{align} \label{eq:integer_prog}
     \min_{\delta^f_{ij} \ \delta^s_{kj}}  & \sum_{i\in \mathcal{I} }  \sum_{j  \in  \mathcal{J}    }  C^f_{ij} \delta^f_{ij}  +  \sum_{k\in \mathcal{K} }    \sum_{j  \in   \mathcal{J}    }  C^s_{kj} \delta^s_{kj}  \ \  \\ 
 {\rm s. \ t.} \   
 & \delta^f_{ij} \in \{0,1\}\qquad \forall (i,j) \in {  \mathcal{I} \times {\mathcal{J}  } } \label{eq:cost_cond_1a} \\
 & \delta^s_{kj} \in \{0,1\}\qquad \forall (k,j) \in {  \mathcal{K}   \times {\mathcal{J} } } \label{eq:cost_cond_1b} \\
 &\sum_{i \in \mathcal{I}}  \delta^f_{ij} + \sum_{k \in {\mathcal{K}  } }  \delta^s_{kj} = 1  \quad  \forall j \in  {\mathcal J} \label{eq:cost_cond_2} \\ 
 &\sum_{j \in {\mathcal{J} }}  \delta^f_{ij}   \le 1    \quad  \forall  i \in   \mathcal{I}     \label{eq:cost_cond_3}  \\
 &  \sum_{j \in {\mathcal{J} }}  \delta^s_{kj} \le 1   \quad  \forall   k \in  {\mathcal{K}}       \label{eq:cost_cond_4}  
\end{align}  
\end{subequations}

All tasks must be assigned as a first or subsequent task to exactly one robot; this constraint is given by \eqref{eq:cost_cond_2}. A robot can move to at most one task location just after completing the current task, which is constrained by eq. \eqref{eq:cost_cond_3} and  \eqref{eq:cost_cond_4}. This optimization problem is solved using the 'linear\_sum\_assignment' function from optimize package of the SciPy library, based on \cite{crouse2016implementing}.

\begin{algorithm}[bp!] 
\caption{ Dynamic resource  allocation  algorithm } \label{algo:DREAM}
\begin{algorithmic}[1]
\State  {Initialize with  $N$ robots and $M$ musical notes to be played with respective timings }
\State {Solve the optimization problem (eq. \eqref{eq:integer_prog}) } \label{algo:label:IP}
\State {$q=$ number of assignments with cost equal to $\kappa$}
\If {   $q>0$   }
\State Add new $q$ robots in team near to the spatial location of infeasible tasks  
\State $N = N+q $
\State {Solve the optimization problem (eq. \eqref{eq:integer_prog}) }
\EndIf 
\State{feasible assignments are obtained}
\State {Compute the sequence of tasks assigned to each robots using the assignments} \label{algo:label:stop}
\end{algorithmic}
\end{algorithm}

\subsection{DREAM Approach} 
The spatio-temporal tasks require some minimum number of robots to execute the given spatio-temporal tasks. The problem formulated in Eq.~\eqref{eq:integer_prog} solves the assignment problem for a given $N$ number of robots. As the infeasible tasks are given a cost equal to the $\kappa$, the solution of Eq ~\eqref{eq:integer_prog} may contain some infeasible assignments with a cost equal to $\kappa$. The DREAM approach computes the minimum number of robots required to execute all the given tasks (for which all computed assignments will be feasible). For clarity, the DREAM algorithm \cite{velhal2022dynamic} is provided briefly.
 
\subsubsection{Trajectory generation}
Once all the feasible assignments are obtained using the 2-step solution provided by the DREAM approach, robots must execute the assigned tasks in sequence. The assignment solution provides the assignments of robots from one location to another. Augmenting the next positions to the last, one can compute the sequence of tasks assigned to each robot. The detailed trajectory generation algorithm can be found in \cite{velhal2022dynamic}.

\subsection{Analysis for collision-free trajectories}
Analysis in \cite{velhal2022dynamic} proved that the solution obtained from DREAM is optimal, and all trajectories are feasible for individual robots. But in multi-robot systems, the feasibility guarantee may fail if robots encounter any collision.
If all trajectories are collision-free, execution can be guaranteed.

\paragraph*{\textbf{Remark}} One should note that there is a subtle difference between the optimal solution of the multiple travelling salesman problem(mTSP) and STMTA. In mTSP, no two optimal paths cross each other. In STMTA, due to temporal-feasibility constraints, optimal paths may cross each other, but their trajectories will be collision-free.

\begin{theorem}
For homogeneous robots operating in a convex region, the feasible solution obtained from the DREAM algorithm gives collision-free trajectories. \label{theorem:2}
\end{theorem}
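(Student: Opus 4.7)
The plan is to prove the theorem by contradiction, using the optimality of the DREAM assignment together with the fact that in a convex arena the distance $d(\cdot,\cdot)$ returned by $A^\star$ equals Euclidean distance and therefore obeys the triangle inequality; consequently every leg of every computed trajectory is a straight segment traversed at speed no larger than $v^{max}$. I will first treat collisions between two assigned robots; a collision with an unassigned (stationary) robot is handled by the same swap with one of the swapped routes collapsed to that robot's fixed location, after which the cost strictly decreases because the moving robot no longer has to detour around the static one.

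Suppose for contradiction that in the optimal assignment two robots $R_a$ and $R_b$ occupy a common point $P$ at a common time $t^\star$. Write the leg on which $R_a$ lies as going from ${\bm{p}}_i^T$ (or the initial position ${\bm{p}}_i^R$ when $R_a$ is before its first task) at time $t_i$ to ${\bm{p}}_j^T$ at time $t_j$, and the leg of $R_b$ as ${\bm{p}}_k^T \to {\bm{p}}_l^T$ between times $t_k$ and $t_l$, with $t_i \le t^\star \le t_j$ and $t_k \le t^\star \le t_l$. The core idea is a tail swap: replace the two edges $(T_i,T_j)$ and $(T_k,T_l)$ in the assignment graph by the crossed pair $(T_i,T_l)$ and $(T_k,T_j)$, reassigning the remainder of each robot's route accordingly and leaving every other $\delta^f$ and $\delta^s$ unchanged. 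The per-task and per-robot constraints \eqref{eq:cost_cond_2}--\eqref{eq:cost_cond_4} are preserved by construction.

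Next I would verify feasibility and cost improvement. Because $R_a$ reached $P$ from its origin in time $t^\star - t_i$ at speed at most $v^{max}$, $d({\bm{p}}_i^T,P) \le v^{max}(t^\star - t_i)$, and since $R_b$ went from $P$ to $T_l$ within $t_l - t^\star$, $d(P,{\bm{p}}_l^T) \le v^{max}(t_l - t^\star)$. Adding these and applying the triangle inequality gives $d({\bm{p}}_i^T,{\bm{p}}_l^T) \le v^{max}(t_l - t_i)$, so $R_a$ can still meet $T_l$ on time; the symmetric estimate handles $R_b \to T_j$. Summing the same two triangle inequalities yields
\begin{equation*}
d({\bm{p}}_i^T,{\bm{p}}_l^T) + d({\bm{p}}_k^T,{\bm{p}}_j^T) \le d({\bm{p}}_i^T,{\bm{p}}_j^T) + d({\bm{p}}_k^T,{\bm{p}}_l^T),
\end{equation*}
so the swapped assignment is feasible with objective no larger than the DREAM solution, and strict inequality immediately contradicts optimality.

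The main obstacle is the equality case, in which both triangle inequalities are tight. Tightness forces $P$ to lie on all four segments, so $T_i,T_j,T_k,T_l,P$ are collinear. I would dispose of this case by a direct velocity argument: on the common supporting line each robot moves at a uniquely determined constant speed $d(T_i,T_j)/(t_j-t_i)$ or $d(T_k,T_l)/(t_l-t_k)$, so a temporal coincidence at $P$ imposes an algebraic relation on the four time labels that either forces two assigned edges to coincide (excluded by the binary nature of $\delta^s$ in \eqref{eq:cost_cond_1b}) or defines a measure-zero configuration that can be broken by an infinitesimal perturbation of task times without changing the DREAM output. I expect the swap-and-triangle step to be routine textbook manipulation; the careful accounting for the collinear degenerate case and for first-leg collisions (where ${\bm{p}}_i^R$ replaces ${\bm{p}}_i^T$ in the inequalities above) is where I anticipate the real work.
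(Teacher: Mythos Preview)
Your swap-and-triangle-inequality contradiction is exactly the argument the paper gives: assume two optimal legs meet at a common point $P$ at a common time, swap the tails, observe that the swapped legs are feasible by homogeneity, and note that the direct edges $T_i\!\to\!T_l$ and $T_k\!\to\!T_j$ cost no more than the original pair, contradicting optimality when the inequality is strict. The paper routes this through an intermediate ``alternate trajectory through $P$'' with the same total length before shortening, but that is cosmetically different from your direct edge-swap; the content is identical. The case of a stationary unassigned robot is split off in the paper as a separate corollary rather than folded into the theorem, but your reduction (set one velocity to zero) is precisely how the paper handles it.

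Where you are actually \emph{more} careful than the paper is the equality case. The paper simply writes ``by the triangle inequality, the direct path \ldots\ is shorter'' and declares a contradiction, never addressing the possibility that both triangle inequalities are tight. You correctly flag this collinear degeneracy as the real obstacle. However, your proposed resolution does not close the gap: robots are not obliged to traverse a leg at constant speed (they may arrive early and wait), so the ``uniquely determined constant speed'' premise fails, and perturbing task times is illegitimate because the $t_j$ are fixed problem data, not free parameters you may nudge. The statement is supposed to hold for every instance, not merely generic ones. So the degeneracy you identified remains open in your write-up, just as it is (silently) in the paper's.
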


\begin{proof} 
We will prove the theorem by contradiction. Without the loss of generality, we assume that the computed trajectories of robot $R_1$ and $R_2$ have conflict and will collide. Let us consider $R_1$ is moving from ($\bm{p}_{A} , t_{A}$)  towards spatio-temporal point  ($\bm{p}_{B} , t_{B}$) with velocity $v_1$ and  $R_2$ is moving from ($\bm{p}_{C} , t_{C}$)  towards spatio-temporal point ($\bm{p}_{D}, t_{D}$) with  velocity  $v_2$   and while doing this the robots will collide at location $\bm{p}_{\ast}$ at time $t_{\ast}$. Without loss of generality, we consider $v_1 \le v_2 \le v^{max} $. This scenario is shown in Fig \ref{fig:collison_free} for clarity. 

\begin{figure}[tbhp!]
    \centering
    \vspace{5pt}
    \includegraphics[width=0.6\linewidth]{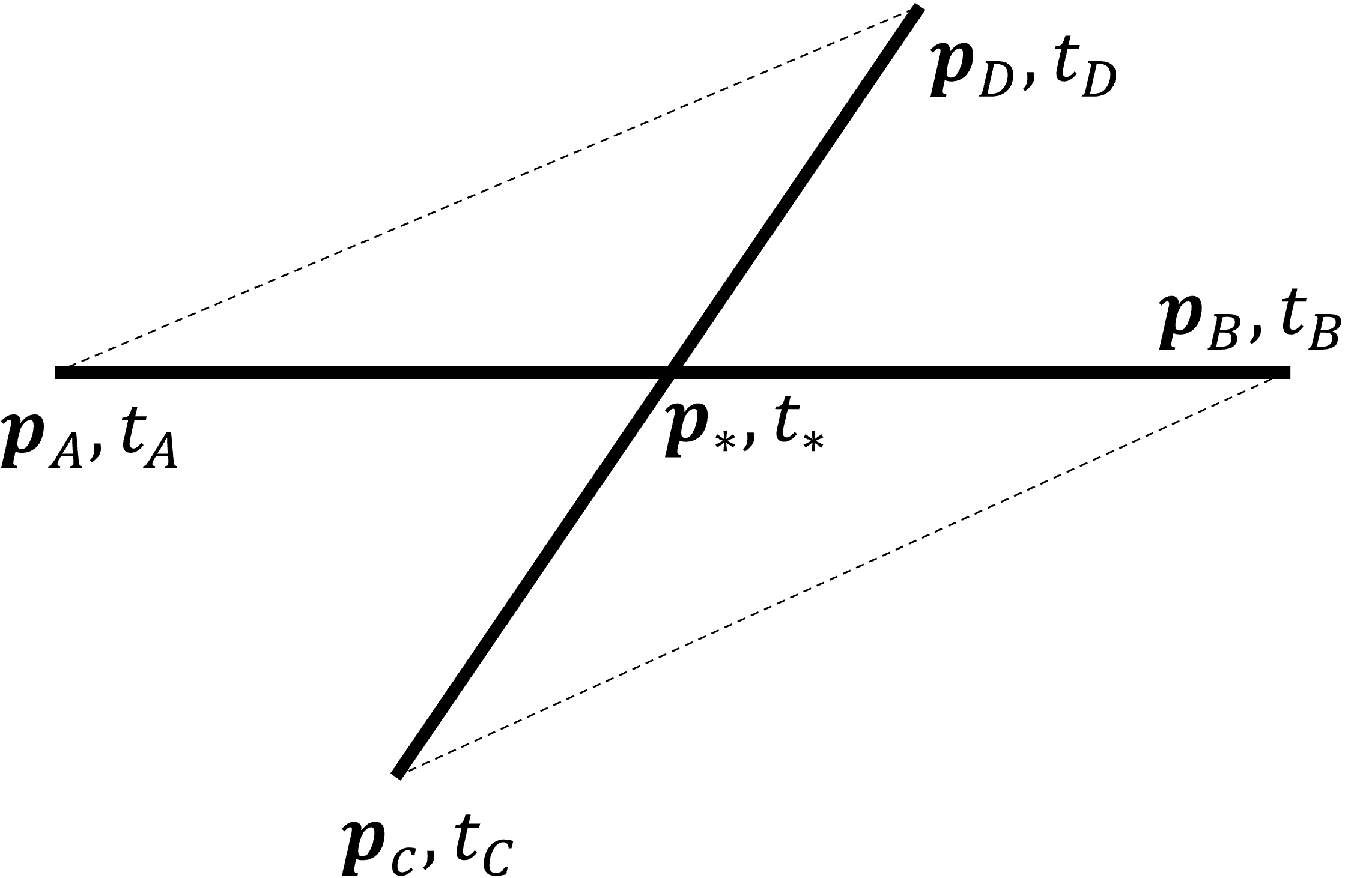}
    \caption{Geometric illustration for proof of Theorem \ref{theorem:2}}
    \label{fig:collison_free}
\end{figure}

As the solution is optimal, $ {\| \bm{p}_{A}\bm{p}_{B}  \|}_2 + {\| \bm{p}_{C}\bm{p}_{D} \|}_2 $ is the minimum cost for executing tasks $T_B$, and $T_D$ by robots $R_1$ and $R_2$

Now due to collision both robot reach $\bm{p}_{\ast}$ at time $t_{\ast}$. From the collision points, the remaining trajectories will be feasible for both robots. As robots are homogeneous, if they exchange their trajectories, the trajectories still remain feasible. (Let us refer to them as alternate trajectories). 

In the alternate solution, the tasks after collision points do not change the total distance that needs to be traveled by both robots. Alternate trajectories are feasible and require traveling the same distance; hence they are also optimal. i.e. 
$ {\| \bm{p}_{A}\bm{p}_{\ast}  \|}_2  + {\| \bm{p}_{\ast}\bm{p}_{D}  \|}_2+ {\| \bm{p}_{C}\bm{p}_{\ast} \|}_2   + {\| \bm{p}_{\ast}\bm{p}_{B}  \|}_2$ is the minimum cost for executing tasks $T_B$, and $T_D$ by robots $R_1$ and $R_2$
 
In alternate optimal solution $R_1$ travels along the path  $  \bm{p}_{A} - \bm{p}_{\ast} - \bm{p}_{D} $, but there exists direct path $ \bm{p}_{A} - \bm{p}_{D}$. By the triangle inequality, direct path from $\bm{p}_{A}$ to $\bm{p}_{D}$ is shorter. On direct path $\bm{p}_{A} - \bm{p}_{D}$,  robot $R_1$ can reach location $\bm{p}_{D}$ before the desired time $t_D$. i.e., the direct path is feasible. Hence the cost of the alternate solution is not optimal. 
 
This contradiction proves that the assumption of the existence of a collision point is incorrect. Thus it is proved that the optimal solution obtained from the DREAM approach gives collision-free trajectories.
\end{proof}

\begin{corollary}
For homogeneous robots operating in the convex region, if the robot follows the trajectory computed by the DREAM algorithm, then the robot will not collide with any unassigned robots. \label{corollary:2}
\end{corollary}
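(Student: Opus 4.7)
The plan is to mirror Theorem \ref{theorem:2}'s contradiction argument, but with the swap performed against an idle robot instead of another assigned one. Suppose for contradiction that an assigned robot $R_1$ following its DREAM trajectory collides with an unassigned robot $R_u$ at location $\bm{p}_{\ast}$ and time $t_{\ast}$. An unassigned robot carries no tasks in the final solution of \eqref{eq:integer_prog} and is never issued any waypoints, so $R_u$ sits permanently at its initial pose $\bm{p}_u = \bm{p}_{\ast}$, and its contribution to the DREAM objective is zero. Let the leg of $R_1$ on which the collision occurs go from $(\bm{p}_A, t_A)$ to the task point $(\bm{p}_B, t_B)$, where $\bm{p}_A$ is either $R_1$'s starting location or the position of $R_1$'s preceding task.

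I would then construct an alternate assignment by removing $T_B$ and every subsequent task from $R_1$'s sequence and prepending exactly that tail to $R_u$'s (empty) sequence. Concretely, the decision variables $\delta^f, \delta^s$ are updated so that $R_u$ now takes $T_B$ as its first task, moving from $\bm{p}_u = \bm{p}_{\ast}$ to $\bm{p}_B$, and then inherits the remainder of $R_1$'s tail unchanged. Every task is still assigned exactly once and the per-robot constraints \eqref{eq:cost_cond_3}--\eqref{eq:cost_cond_4} still hold. Temporal feasibility reduces to checking the new first leg of $R_u$ only, since the subsequent-task costs on the tail depend only on the ordered task pair, not on which homogeneous robot executes them. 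But $R_1$ itself was slated to reach $\bm{p}_B$ from $\bm{p}_{\ast}$ by time $t_B$, so ${\|\bm{p}_{\ast}\bm{p}_B\|}_2 \le v^{max}(t_B - t_{\ast}) \le v^{max}\, t_B$, and homogeneity lets $R_u$ match this timing starting from $t=0$.

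The cost comparison is the heart of the argument. Since the region is convex, the $A^{\ast}$ shortest path between any two points is the straight segment joining them, and the collision point $\bm{p}_{\ast}$ lies on $R_1$'s optimal leg from $\bm{p}_A$ to $\bm{p}_B$; consequently
\begin{align}
{\|\bm{p}_A\bm{p}_B\|}_2 \;=\; {\|\bm{p}_A\bm{p}_{\ast}\|}_2 + {\|\bm{p}_{\ast}\bm{p}_B\|}_2 \;>\; {\|\bm{p}_{\ast}\bm{p}_B\|}_2,
\end{align}
with strict inequality because $R_1$ has traveled a positive distance before the collision, so $\bm{p}_A \neq \bm{p}_{\ast}$. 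The alternate assignment therefore pays strictly less on this leg and exactly the same everywhere else, yielding a feasible solution of strictly smaller total cost --- contradicting the optimality of the DREAM output established in \cite{velhal2022dynamic}.

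The principal obstacle I expect is justifying the opening claim that an unassigned robot remains at its initial position: the optimization \eqref{eq:integer_prog} only prescribes motions for assigned tasks and is formally silent on the behaviour of idle robots. I would address this in the proof by appealing to the trajectory-generation step, under which a robot with no tasks in its sequence is issued no waypoints and hence stays put. Once this is granted, the straight-line convexity argument combined with the triangle inequality closes the corollary exactly as sketched above.
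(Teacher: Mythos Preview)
Your proposal is correct and follows essentially the same approach as the paper, which simply remarks that Corollary~\ref{corollary:2} follows from the proof of Theorem~\ref{theorem:2} by treating the unassigned robot as the case $v_1=0$ (a stationary robot executing a fictitious task at its own location). Your explicit tail-transfer construction is in fact a more careful rendering of that same swap-and-contradict idea, making the strict cost decrease transparent where the paper leaves it implicit.
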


The proof of Corollary \ref{corollary:2} is straightforward. It can be proven with a case of $v_1 = 0$ (unassigned defender executes the fictitious task at the same location and travels with zero velocity) in the proof given for Theorem \ref{theorem:2}.

 \begin{corollary}
For the music-playing problem defined in this paper, the computed trajectory of robots is collision-free. \label{corollary:3}
\end{corollary}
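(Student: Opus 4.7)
The plan is to reduce the statement to Theorem \ref{theorem:2} by exploiting a convex decomposition of the piano arena. Although the full arena is not globally convex, due to the walls that separate adjacent lanes, the robot-accessible free space splits naturally into a small collection of convex sub-regions: the top waiting strip, the bottom waiting strip, and each individual lane between two consecutive walls. Since the $A^*$ paths used to define the distance $d(\cdot,\cdot)$ respect the walls, every computed trajectory is a concatenation of sub-segments, each of which lies inside one of these convex sub-regions, and the endpoints of adjacent sub-segments are on the shared boundary of two such pieces.

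First I would argue that any candidate two-robot collision point $\bm{p}_{\ast}$ must lie inside one of these convex sub-regions. If $\bm{p}_{\ast}$ lies strictly inside one waiting strip or strictly inside a single lane, then the triangle-inequality swap argument of Theorem \ref{theorem:2} applies almost verbatim: $A^*$ paths between two points of the same convex piece coincide with straight segments, so exchanging the trajectory tails of the two colliding robots at $\bm{p}_{\ast}$ yields an alternate feasible assignment whose total distance is strictly reduced by replacing the bent path $\bm{p}_{A}-\bm{p}_{\ast}-\bm{p}_{D}$ with the direct segment $\bm{p}_{A}\bm{p}_{D}$, contradicting the optimality certified by DREAM exactly as in the proof of Theorem \ref{theorem:2}. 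The case of a collision with an unassigned robot is then handled by applying Corollary \ref{corollary:2} to the same convex sub-region.

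The hard part will be the case in which the candidate collision lies inside a lane that is traversed by two distinct note-playing tasks. Here, in addition to the geometric swap, I would have to exploit the temporal structure encoded in the costs \eqref{eq:cost_DT} and \eqref{eq:cost_TT1}. The idea is that the waiting-and-crossing protocol forces any two uses of the same lane to be temporally separated by at least the lane-traversal time $t_{k,j}^{min}$: an overlapping pair of uses either incurs the prohibitive cost $\kappa$ (and is therefore excluded by DREAM as infeasible) or admits a feasible tail-swap that strictly shortens the total distance, again contradicting optimality. A delicate sub-case here is the head-on configuration in which one robot enters the lane from the top while the other enters from the bottom with near-overlapping timing; I would address it by noting that after the swap both remaining subsequences remain feasible, since the lane is symmetric and the waiting buffer $\tau$ is common to both robots, so the triangle-inequality argument closes this case as well. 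Combining the three cases then establishes the collision-free guarantee for the music-playing arena.
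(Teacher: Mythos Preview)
Your overall strategy---partition the arena into convex pieces and invoke Theorem~\ref{theorem:2} on each---is exactly what the paper does. The paper uses a coarser three-region split: $\Omega^1$ (the strip above the lanes), $\Omega^2$ (the strip below), and $\Omega^3$ (the lanes). For the two convex strips it applies Theorem~\ref{theorem:2} just as you do for the waiting strips.

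The difference is in how the lane region is handled. You try to keep the lanes inside the swap/triangle-inequality framework and close the head-on sub-case with a tail-swap plus an appeal to the cost structure \eqref{eq:cost_DT}--\eqref{eq:cost_TT1}. The paper takes a shorter route that bypasses the swap machinery entirely for $\Omega^3$: a robot enters a lane \emph{only} to play a note and \emph{only} at the scheduled task time, and by constraint \eqref{eq:cost_cond_2} each note is assigned to exactly one robot; hence no two robots enter a lane simultaneously, and $\Omega^3$ is declared collision-free by scheduling alone. This sidesteps precisely the case you flag as ``the hard part.'' Your head-on sketch is also shakier than you indicate: two note-playing tasks in the same lane share the \emph{same} spatial target (the lane midpoint), so after swapping tails the ``direct'' paths $\bm{p}_A\bm{p}_D$ and $\bm{p}_C\bm{p}_B$ coincide with the original paths up to a relabeling of task times, and the strict triangle-inequality contradiction of Theorem~\ref{theorem:2} does not fire. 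The paper's scheduling observation is both simpler and avoids this degeneracy.
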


\begin{proof}
\begin{figure}[bht!]
    \centering
      \vspace{5pt}
    \includegraphics[width=0.70\linewidth]{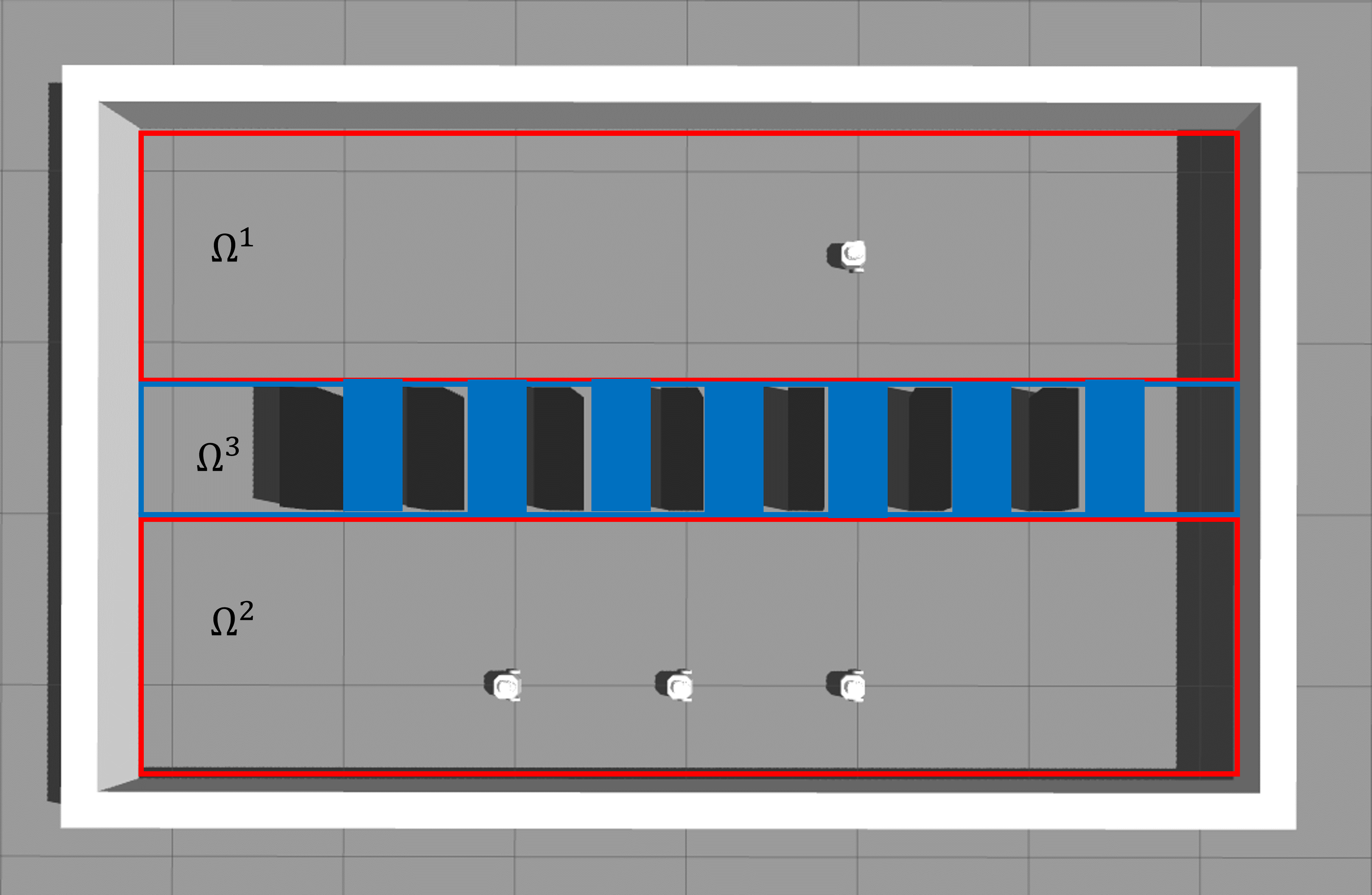}
    \caption{Regions in arena. The lanes are shown by vertical blue shaded area.}
    \label{fig:proof_piano_regions}
\end{figure}
let us divide the arena into three regions, $\Omega^1$ (a region above the lanes) ,$\Omega^2$  (a region below the lanes), and $\Omega^3$ (remaining region of lanes),  as shown in Fig \ref{fig:proof_piano_regions}.  

Without loss of generality we assume that a  robot $R_i$ is initialized in $\Omega^1$. As robot is in $\Omega^1$, it will be asked to visit entry location of lane which lies in $\Omega^1$. 
The region  $\Omega^1$ is convex,  using the Theorem \ref{theorem:2}, robot  will have collision-free trajectory. 

For playing music but $R_i$ have to wait at the entry point till the desired time and them posses through lanes i.e. $\Omega^3$  and goes to region $\Omega^2$. 
From region  $\Omega^1$,  $R_i$ can enter region  $\Omega^3$ only for playing music and only  at desired task time.  Only one robot is assigned to play a musical note, so no two robots will enters the lane at same time. Hence robots won't collide with any other robots in $\Omega^3$.

Once robot reaches the other end of lane, it enters in  the region $\Omega^2$. For executing the next task robot will be asked to visit the entry point of assigned lane which lies in $\Omega^2$. Region $\Omega^2$ is convex and hence the robot will have collision-free trajectory in $\Omega^2$.

Robot will have collision-free trajectory whenever it is in region $\Omega^1$,$\Omega^2$, and $\Omega^3$. Hence, robot has collision-free trajectory in entire arena.
\end{proof}

\section{Simulation and Results}\label{sec:Simulations}
\subsection{Simulation  Architecture  }
The DREAM algorithm has been used to play piano music. This system is implemented and tested with the help of ROS2 in the Gazebo simulation environment. The simulations are conducted in Ubuntu 20.04 system with a 3.20GHz processor and 16 GB RAM.
An open-source Nav2 package \cite{macenski2020marathon} is used used to navigate the multiple robots in the environment. It is also used to compute a feasible path from one location to another. The distance between the two points is computed using the line integral along the computed feasible path.

\subsubsection{ROS topics, services and actions}
The communication between robots works depends on topics, services, and actions. Topic-based communications happen in continuous publishing and subscribing fashion. The values or data are published by one robot asynchronously. IMU readings and poses are published to respective topics; the same topics are subscribed by another robot or process and vice-versa. This enables the algorithm to receive pose information and command actions to navigate the robots. 
\begin{figure}[t!]
    \centering
    \includegraphics[width=0.8\linewidth]{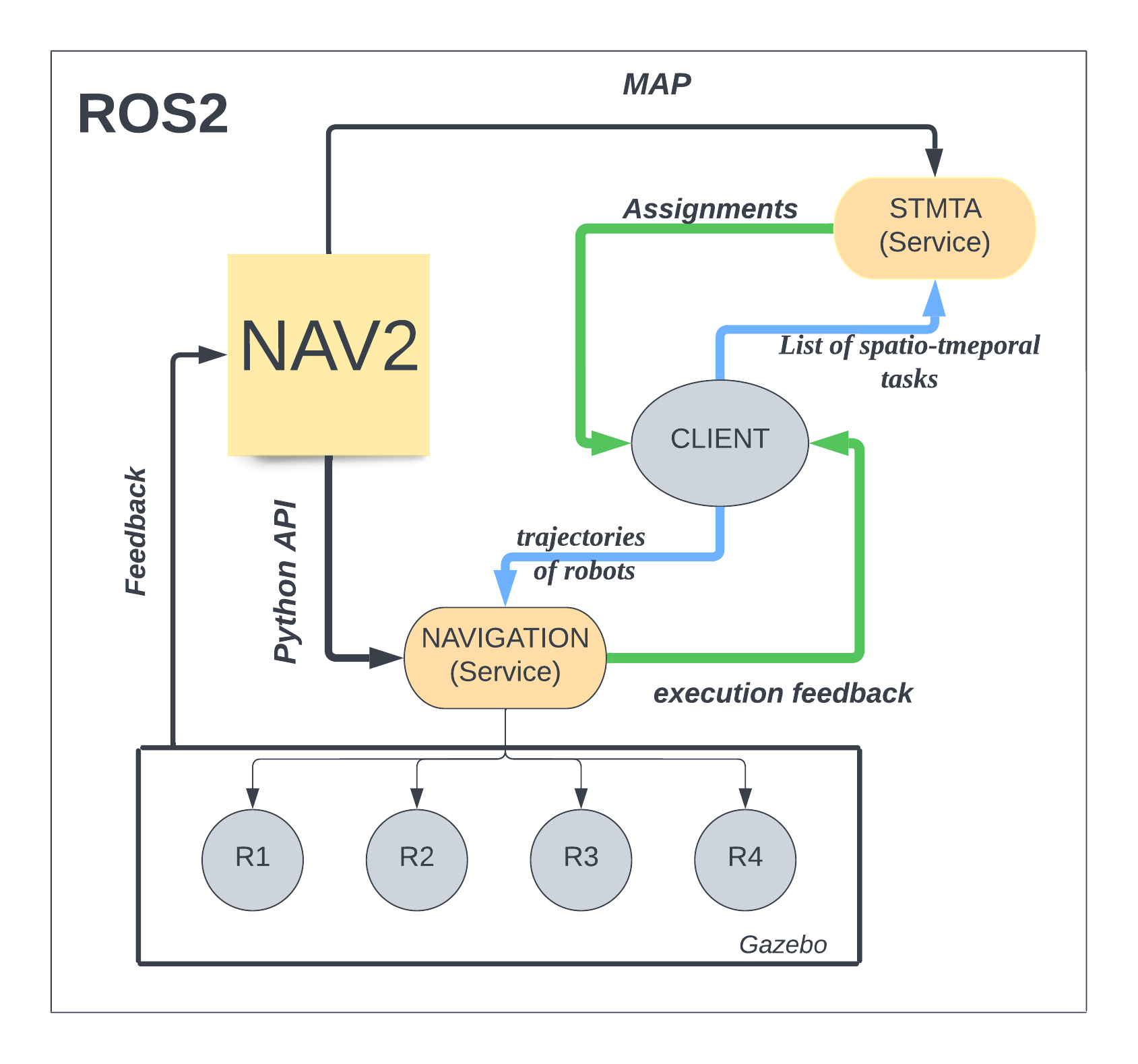}
    \caption{Simulation architecture}
    \label{fig:architecture}
\end{figure}

Services are another way to pass data between nodes in ROS2. They are synchronous or blocking remote procedure calls where one node (client) can call a function that executes in another node (service). Fig \ref{fig:architecture} shows the simulation architecture consisting of navigation and spatio-temporal multi-task assignment (STMTA) services.
For the computation of the cost, i.e., for computing the distance from one point to another (on the live map), the 'ComputePathToPose' service from the 'nav2' package has been used. 
Once the cost matrices are computed, the  STMTA service is called to compute tasks assigned to robots. This service executes the DREAM algorithm and provides feasible and optimal task assignments. It first computes the required number of robots and then, with an updated team of robots, computes the assignments such that all musical notes are played at respective times while minimizing the total distance traveled.

After computing the assignments, each robot computes its trajectory (sequence of tasks). Then multi-robot navigation service is invoked to operate all the robots asynchronously. Each robot executes its own sequence of tasks; it reaches the waiting area near the assigned note, waits till the desired time, and then passes through the lane of note to play the musical note. Each task is executed using the asynchronous navigation service for the simultaneous operations of robots.

\subsection{ Performance Evaluation  } \label{sec:results}
The Piano setup is shown in Fig. \ref{fig:initail_conditoin}. 
The team of 4 robots plays a happy birthday tune in the piano arena to illustrate the working of the proposed approach. A total of 7 different musical notes are used. Here, the gap between the two walls is 0.5m, and the lane length is 0.4m, but for turning robots (considering the waiting area), the practical distance needs to be traveled is 0.8 m. The turtlebots (with a max velocity of 0.5 m/sec) are used as robots to travel in the arena and play music. As the distance that needs to be traveled is large compared to the velocity of the robots, the musical tasks are given for execution by scaling the time ten times. 

\begin{figure}[bhtp!]
    \centering
      \vspace{10pt}
    \includegraphics[width=0.8\linewidth]{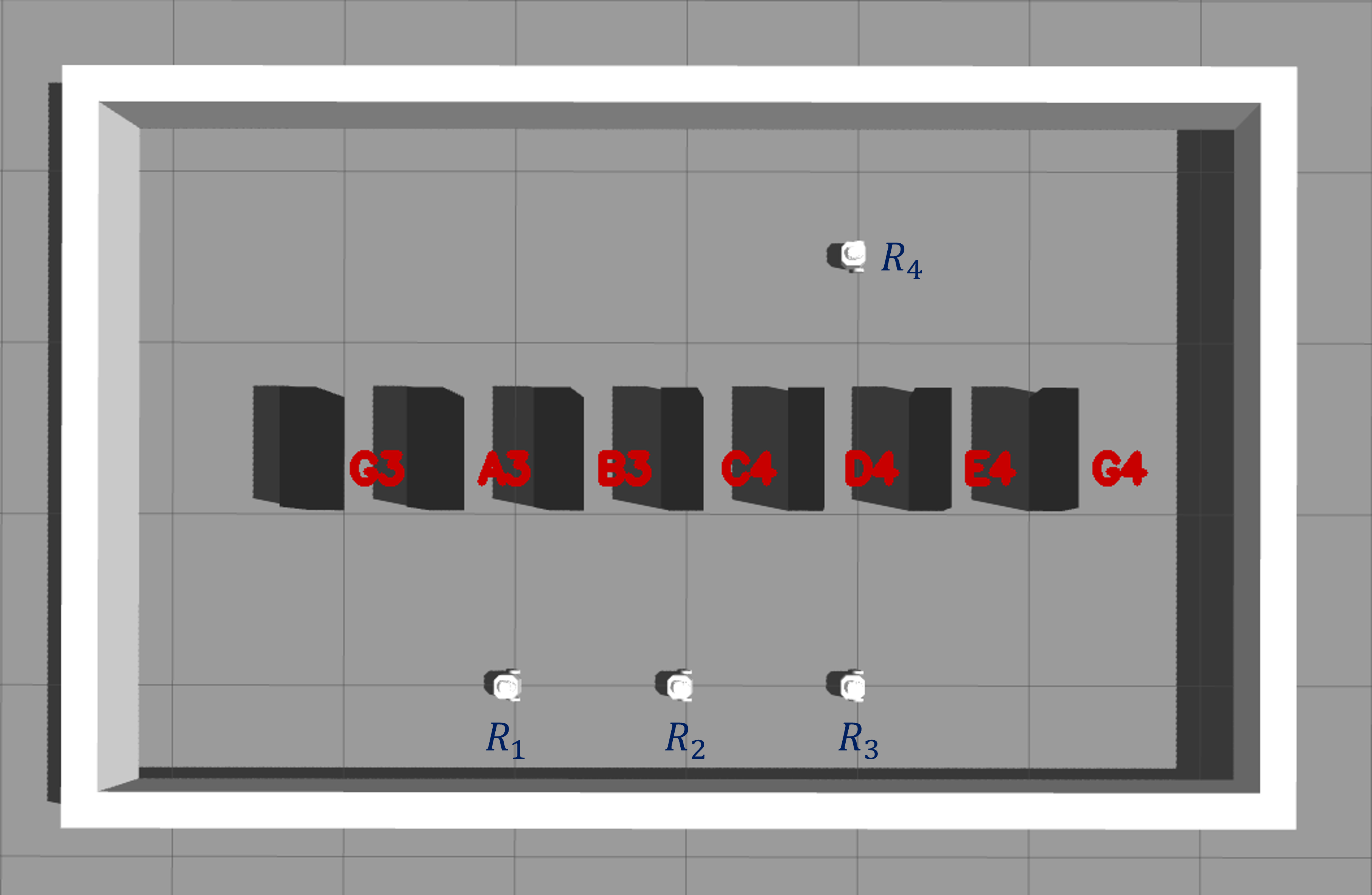}
    \caption{Piano set-up and initial conditions of robots}
    \label{fig:initail_conditoin}
\end{figure}

A total of 24 musical notes are required to play the tune and the spatio-temporal tasks considered are given in table \ref{table:tasks}. The table gives the musical note to be played and its desired time of play. Each note can be played by traveling through a specific lane on the piano, and these lanes and their respective notes are shown in Fig \ref{fig:initail_conditoin}. The  attached video demonstrates the music-playing robots.

\begin{table}[htbp!]
\vspace{10pt}
\centering
\setlength{\arrayrulewidth}{0.5mm}
\setlength{\tabcolsep}{8pt}
\renewcommand{\arraystretch}{1.1}
\begin{tabular}{ |c|c|r|    }
\hline
\begin{tabular}[c]{@{}c@{}}\textbf{Task} \\ \textbf{No}\end{tabular} & \textbf{Note} & \begin{tabular}[c]{@{}c@{}}\textbf{time }\\  \textbf{(sec)}\end{tabular} \\  
\hline
1 &G3 &105\\
\hline
2 &G3 &113  \\
\hline
3 &A3 &119 \\
\hline
4 &G3 &124\\
\hline
5 &C4 &133 \\ 
\hline
6 &B3 &142\\
\hline
7 &G3 &157\\
\hline
8 &G3 &163\\
\hline
9 &A3 &168\\
\hline
10 &G3 &172\\
\hline
11 &D4 &183\\
\hline
12 &C4 &192\\
\hline
\end{tabular} \quad%
\begin{tabular}{|c|c|r|}
\hline
\begin{tabular}[c]{@{}c@{}}\textbf{Task}\\\textbf{No}\end{tabular} & \textbf{Note} & \begin{tabular}[c]{@{}c@{}}\textbf{time }\\  \textbf{(sec)}\end{tabular} \\   
\hline
13 &G3 &206\\
\hline
14 &G3 &214\\
\hline
15 &G4 &221\\
\hline
16 &E4 &229\\
\hline
17 &C4 &235\\
\hline
18 &B3 &239\\
\hline
19 &A3 &250\\
\hline
20 &G4 &260\\
\hline
21 &G4&270 \\
\hline
22 &C4 &280\\
\hline
23 &D4 &290\\
\hline
24 &D4 &300\\
\hline
\end{tabular}
\caption{Spatio-temporal tasks for playing happy birthday tune}
    \label{table:tasks}
\end{table}

Once all tasks are received, the robots use the map of the piano arena to compute the distances and then compute the cost matrices. When only one robot is used to solve STMTA, it demands three extra robots. So, total of four robots has been used to play the desired tune.
The computed trajectories for the robots are,
for robot 1 , $\mu_1 = 
\{ T_{1},T_{2},T_{4},T_{7},T_{8},T_{10},T_{13},T_{14},T_{19} \}$;
for robot 2, $\mu_2 = \{ T_{3}, T_{9},T_{12},T_{17},T_{22},T_{23},T_{24} \}$; for robot 3, $\mu_3 = \{ T_{5},T_{6},T_{18}  \}$,
for robot 4, $\mu_4 = \{ T_{11},T_{15},T_{16},T_{20},T_{21}  \}$.

\begin{figure}[thbp!]
    \centering
    \includegraphics[width=0.9\linewidth]{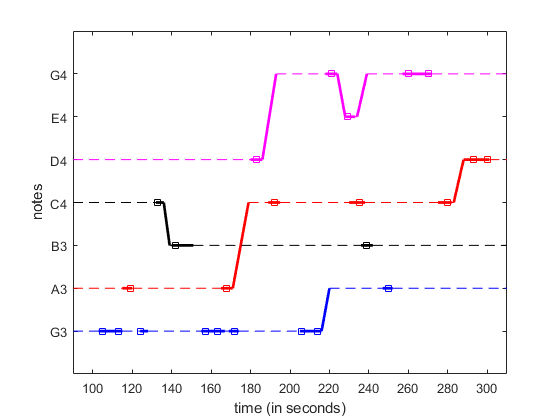}
    \caption{Operations of robots while playing music}
    \label{fig:result_execution}
\end{figure}
The attached video shows the operations of robots to play the happy birthday tune. Fig \ref{fig:result_execution} shows the operations of robots. Four different colors are used for the four robots. The solid line denotes the robots in motion, and the dashed lines denote the robots waiting. One can observe that four robots are required to execute tasks during the time interval of $230$ sec to $250$ sec. If any robot is absent, then all notes can not be played. It shows that the team size computed by the DREAM (i.e., four robots in this case) is a necessary and sufficient size of the team to execute the given spatio-temporal tasks.

\section{Conclusions} \label{sec:Conclusion}
In this paper, we considered the spatio-temporal multi-task assignment problem for playing piano music using a team of robots. The dynamic resource allocation with multi-task assignments (DREAM) approach can be directly used to compute the minimum team size and the optimal assignments (which minimizes the total distance traveled). The DREAM approach solves the bottleneck issue of iterative computation for the required number of robots and provides the two-step solution to compute the required minimum number of robots and their optimal assignments to execute given spatio-temporal tasks. This paper analyses the DREAM algorithm and proves that the solution gives collision-free trajectories for all robots when operated in convex regions. The working of the DREAM approach is demonstrated for the piano music-playing robot simulations in a ROS2-Gazebo environment. Future work focuses on the heterogeneous robots playing music on more complex and various instruments and exploitation of guaranteed collision-free trajectories for various STMTA problems.

%
\section*{ACKNOWLEDGMENT}
Authors would like to acknowledge the financial support provided by Nokia CSR funds and Nokia Network Robotics laboratory at IISc.

\bibliographystyle{IEEEtran}
\bibliography{main_bib.bib}
\end{document}